%% file: root.tex
\documentclass[letterpaper, 10 pt, conference]{ieeeconf}  

\IEEEoverridecommandlockouts                              

\input{preamble}

\makeatother

\DeclareCaptionFont{mysize}{\fontsize{8}{9.6}\selectfont}
\title{\LARGE \bf OcclusionCBF: Backup Control Barrier Functions for Safe Navigation Among Hidden Dynamic Obstacles}
\author{Taekyung Kim$^{1, *}$, Hun Kuk Park$^{1, *}$, Renya Wada$^{2}$, Nikolay Atanasov$^{3}$, Shumon Koga$^{2}$, Dimitra Panagou$^{1,4}$
\thanks{$^{*}$These authors contributed equally to this work}
\thanks{$^{1}$Department of Robotics, $^{4}$Department of Aerospace Engineering, University of Michigan, Ann Arbor, MI 48109, USA {\tt\footnotesize \{taekyung, parkcart, dpanagou\}@umich.edu} } 
\thanks{$^{2}$Department of Computer Science and Systems Engineering, Kobe University, Hyogo, Japan {\tt\footnotesize 267x093x@stu.kobe-u.ac.jp, koga@harbor.kobeu.ac.jp  } } 
\thanks{$^{3}$Department of Electrical and Computer Engineering, University of California, San Diego, CA 92093, USA {\tt\footnotesize natanasov@ucsd.edu} }
}
\begin{document}
\maketitle
\thispagestyle{empty}
\pagestyle{empty}


\input{content}

\addtolength{\textheight}{0 cm}   





\bibliographystyle{IEEEtran}
\typeout{}
\bibliography{references.bib}

\end{document}

%% file: preamble.tex
\usepackage{graphics} 
\usepackage{subfig} 
\usepackage{wrapfig}

\usepackage{amsthm} 
\usepackage{amsmath,amssymb}
\usepackage{graphicx}
\usepackage{tabularx}
\usepackage{multirow, makecell}
\usepackage{diagbox}
\usepackage{slashbox}
\usepackage{rotating}
\usepackage{cite}

\usepackage{url}
\usepackage{bm}
\usepackage[table]{xcolor}
\usepackage{hyperref}
\usepackage{siunitx} 
\usepackage{booktabs}
\usepackage{cleveref}
\usepackage{mathtools} 
\usepackage{upgreek} 

\let\labelindent\relax 
\usepackage{enumitem} 

\usepackage{lipsum}

\usepackage[ruled,vlined]{algorithm2e}

\usepackage[nolist, nohyperlinks]{acronym}
\acrodef{MPC}[MPC]{Model Predictive Control}
\acrodef{QP}[QP]{Quadratic Program}
\acrodef{CBF}[CBF]{Control Barrier Function}

\newcommand{\vx}{{\boldsymbol x}}
\newcommand{\vu}{{\boldsymbol u}}
\newcommand{\vq}{{\boldsymbol q}}

\newcommand{\vy}{{\boldsymbol y}}

\newcommand{\vp}{{\boldsymbol p}} 
\newcommand{\vv}{{\boldsymbol v}} 

\newcommand{\calC}{\mathcal{C}} 
\newcommand{\calK}{\mathcal{K}} 

\newcommand{\calX}{\mathcal{X}} 
\newcommand{\calW}{\mathcal{W}} 

\newcommand{\calO}{\mathcal{O}} 
\newcommand{\calS}{\mathcal{S}} 

\DeclareMathOperator*{\argmin}{arg\,min}
\newcommand{\od}{\operatorname{d}}

\newtheorem{theorem}{Theorem}
\newtheorem{lemma}{Lemma}

\theoremstyle{definition}
\theoremstyle{definition}
\newtheorem{problem}{Problem}

\theoremstyle{definition}

\theoremstyle{definition}

\theoremstyle{definition}
\newtheorem{assumption}{Assumption}

\usepackage{fontawesome5}

%% file: content.tex
\providecommand{\calU}{\mathcal{U}}
\providecommand{\calR}{\mathcal{R}}
\providecommand{\calH}{\mathcal{H}}
\providecommand{\calI}{\mathcal{I}}
\providecommand{\calQ}{\mathcal{Q}}
\providecommand{\va}{{\boldsymbol a}}
\providecommand{\vd}{{\boldsymbol d}}
\providecommand{\veta}{{\boldsymbol \eta}}
\providecommand{\vxi}{{\boldsymbol \xi}}

\hypersetup{
    pdfborder={0 0 0}
}

\begin{abstract}
Robots navigating under occlusion may enter states from which no admissible input can avoid a dynamic obstacle once it becomes visible. We present OcclusionCBF, a safety filter that extends backup control barrier functions to reachable-occupancy predictions for potentially hidden dynamic obstacles in occluded regions. The method certifies a prescribed backup rollout against collision-inflated occupancy and a verified terminal set, yielding affine constraints for minimally invasive quadratic-program filtering. We establish recursive feasibility of the resulting safety filter, and collision avoidance for every hidden-obstacle motion covered by the occupancy prediction. Randomized benchmarks, MetaUrban simulations, and hardware experiments demonstrate improved task success over reactive and occlusion-aware predictive baselines with millisecond-scale computation. \href{https://www.taekyung.me/occlusion-cbf}{\textcolor{red}{[Project Page]}}\footnote{Project page: \href{https://www.taekyung.me/occlusion-cbf}{https://www.taekyung.me/occlusion-cbf}} \href{https://github.com/tkkim-robot/occlusion-cbf}{\textcolor{red}{[Code]}} \href{https://youtu.be/csfJzw1GwJo}{\textcolor{red}{[Video]}} \href{https://occlusion-cbf.taekyung.me/}{\textcolor{red}{[Web Demo]}}
\end{abstract}



\section{Introduction}
\label{sec:introduction}

Autonomous robots frequently operate with incomplete information about their surroundings.
Buildings, parked vehicles, shelving, and other objects create blind
regions in which pedestrians or vehicles may move without being observed.
A controller that avoids only currently detected obstacles may therefore
steer the robot into a state from which no admissible input can prevent a
collision once a hidden obstacle becomes visible. As illustrated in
\autoref{fig:ocbf_intro}, this failure is not merely a consequence of sensing latency. It arises from the interaction among uncertain hidden-obstacle occupancy, robot dynamics, and finite control authority.

A safety-critical controller must therefore reason about where a hidden
obstacle may be and how it may move before detection. Existing approaches
propagate hidden-obstacle positions under bounded motion and incorporate the
resulting reachable occupancy into safety verification or receding-horizon
planning~\cite{sanchez_foresee_2022,firoozi_oampc_2025,zheng_occlusionaware_2026}.
Although effective, embedding this prediction within the planner couples
safety to a particular planning architecture and can require solving a
trajectory optimization at every control update. Our objective is instead
to construct a modular safety filter that can be interposed between an
arbitrary nominal controller and the system dynamics while explicitly accounting for
the robot dynamics and input constraints.

\begin{figure}[t]
    \centering
    \includegraphics[width=0.99\linewidth]{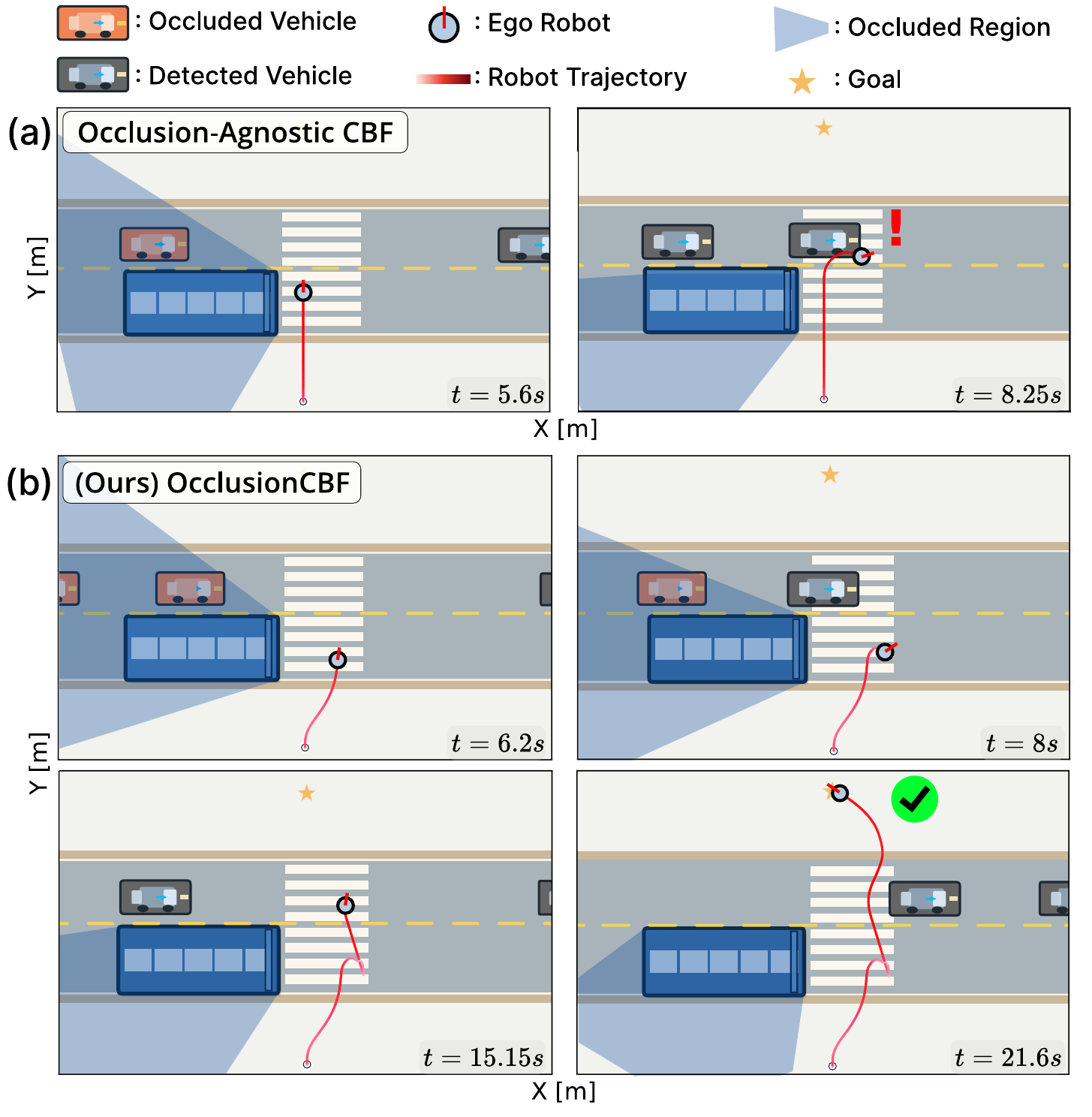}
    \caption{Illustrative comparison of safety filtering with and without
    occlusion awareness for a double-integrator robot at a blind crossing.
    (a) A conventional CBF-QP considers only detected vehicles and follows
    the nominal command while the approaching vehicle remains occluded,
    reaching a state with insufficient control authority to avoid collision
    after detection.
    (b) OcclusionCBF propagates the admissible motion of potentially hidden
    vehicles, certifies a backup rollout against the resulting reachable occupancy, and minimally modifies the nominal input to slow down and
    increase clearance before entering the crossing.}
    \label{fig:ocbf_intro}
\vspace{-5pt}
\end{figure}

We address this problem with \textbf{\emph{OcclusionCBF}}, an extension of
the Backup Control Barrier Function~(CBF) formulation \cite{chen_backup_2021}
to time-varying reachable-occupancy predictions for potentially hidden
dynamic obstacles. At each prediction update, possible obstacle locations are
propagated over the backup horizon and inflated for collision avoidance. A
prescribed backup policy is rolled out against every reachable-occupancy set
and required to terminate in a verified time-varying terminal set. By
differentiating the occupancy and terminal margins through the backup flow,
including their dependence on current time and look-ahead time, we obtain
constraints affine in the current control input. The resulting Quadratic
Program~(QP) safety filter minimally modifies the nominal input while
retaining the recursive-feasibility mechanism of Backup CBFs. Under the
stated occupancy-prediction, terminal-set, and update conditions, the backup
input certifies pointwise QP feasibility, while the resulting safety filter
preserves the recoverable set and avoids every hidden-obstacle motion covered
by the occupancy prediction.

\subsection{Related Work}

Visibility-aware planning constrains motion to observed free space or
actively seeks trajectories that reveal relevant regions before
traversal~\cite{goretkin_look_2020,kim_visibilityaware_2025}. These methods
address unknown geometry and limited sensing, but generally do not model the
worst-case future motion of an unseen dynamic obstacle. Set-based verification
identifies critical field-of-view boundaries, represents possible hidden
obstacles by interval-valued states, propagates their occupancy, and verifies
whether a given trajectory retains a collision-free fail-safe
maneuver~\cite{orzechowski_tackling_2018}. Sequential reachability models
hidden obstacles as point-mass state sets and refines them using reachability
and previous observations~\cite{sanchez_foresee_2022}. Game-theoretic
active perception instead formulates planning as a hybrid pursuit-evasion
game that accounts for future detection and feedback avoidance after
detection~\cite{zhang_safe_2021}.

Occlusion-aware predictive planners embed hidden-agent reachability and
contingency reasoning within trajectory optimization. OA-MPC combines
forward reachable sets with a terminal stopping condition to establish
recursive feasibility~\cite{firoozi_oampc_2025}. Control-Tree optimization
and OACP represent contingencies through branched controls over discrete
hypotheses and jointly optimized exploration and fallback trajectories,
respectively~\cite{phiquepal_controltree_2021,zheng_occlusionaware_2026}.
APRO represents occlusion reachability using unions of AH-polytopes and
reduces exact trajectory-safety checks to linear
programs~\cite{chung_exact_2026}. These methods perform occlusion-aware
safety reasoning at the trajectory level. OcclusionCBF instead
converts a compatible conservative reachable-occupancy prediction into an
input-affine safety filter for an arbitrary nominal controller.

CBFs enforce forward invariance through an affine constraint on the
immediate input~\cite{ames_control_2019}. High-order CBFs address
constraints with high relative degree~\cite{xiao_control_2019}, but do
not by themselves ensure feasibility under input constraints. Backup CBFs use
the recoverable set induced by a backup policy as an implicit safe
set~\cite{chen_backup_2021,kim_backupbased_2026}, while robust Backup CBFs
address uncertainty in the ego-system
dynamics~\cite{vanwijk_disturbancerobust_2024}. OcclusionCBF instead propagates uncertainty in exogenous hidden-obstacle
occupancy and evaluates the backup trajectory against the resulting
occupancy predictions.

\subsection{Contributions}
Our contributions are summarized as follows.
\begin{itemize}[leftmargin=*,itemsep=1pt,topsep=2pt]
    \item We extend Backup CBFs to conservative reachable-occupancy
    predictions, yielding a planner-agnostic filter that certifies a backup
    maneuver for potentially hidden dynamic obstacles without embedding
    safety in a specific trajectory optimizer.

    \item We derive input-affine rollout and terminal constraints for the
    moving prediction horizon and prove that the backup policy certifies recursive feasibility of the QP and collision avoidance for every
    obstacle motion covered by the occupancy prediction.

    \item We give a smooth polyhedral realization for multiple occluded
    regions. Across five baselines, randomized tests attain the highest
    hard-constrained success rate at every tested density with
    millisecond-scale computation; MetaUrban and hardware results show
    intervention before detection and safe goal completion.
\end{itemize}

\section{PRELIMINARIES}
\label{sec:preliminaries}

\subsection{Robot Dynamics and Time-Varying Safety}

Consider a robot with control-affine dynamics
\begin{equation}
    \dot{\vx}=f(\vx)+g(\vx)\vu,
    \qquad
    \vx\in\calX,\quad \vu\in\calU,
    \label{eq:system}
\end{equation}
where $\calX\subseteq\mathbb{R}^{n}$ and
$\calU\subset\mathbb{R}^{m}$ are compact and convex, and $f$ and $g$ are
continuously differentiable and locally Lipschitz. Let
$\calW\subseteq\mathbb{R}^{d}$ be the workspace. The continuously
differentiable function $P:\calX\rightarrow\calW$ extracts the workspace position
$\vp=P(\vx)$ of a fixed robot reference point from the robot state. The robot is
commanded toward its objective by a possibly time-varying nominal
controller
$\vu_{\textup{nom}}:\calX\times\mathbb{R}_{\geq0}\rightarrow\calU$. A safety filter modifies $\vu_{\textup{nom}}$ to satisfy safety constraints;
it is minimally invasive when it selects, among the feasible inputs, one
closest to $\vu_{\textup{nom}}$ in a prescribed norm.

We use time-varying barrier functions to enforce safety constraints induced by moving occupancy sets. Let
$h:\calX\times\mathbb{R}_{\geq 0}\rightarrow\mathbb{R}$ be continuously
differentiable and define
$\calC(t)=\{\vx\in\calX\mid h(\vx,t)\geq 0\}$. Suppose there exists an
extended class-$\calK$ function $\alpha:\mathbb{R}\rightarrow\mathbb{R}$ for which, for all
$(\vx,t)\in\calX\times\mathbb{R}_{\geq0}$,
\begin{equation}
\begin{aligned}
&\sup_{\vu\in\calU}\Biggl[
    \frac{\partial h}{\partial t}(\vx,t)
    +\nabla_{\vx}h(\vx,t)^{\top}
    \bigl(f(\vx)+g(\vx)\vu\bigr)
\Biggr] \\
&\qquad \geq -\alpha\bigl(h(\vx,t)\bigr).
\end{aligned}
\label{eq:tvcbf_condition}
\end{equation}
A minimally invasive safety filter can be implemented as the CBF-QP
\[
\begin{aligned}
& \vu^{\star}(\vx,t) =
\argmin_{\vu\in\calU}
\left\|\vu-\vu_{\textup{nom}}(\vx,t)\right\|_{2}^{2}
\\
& \textup{s.t.} \, 
\frac{\partial h}{\partial t}(\vx,t)
+\nabla_{\vx}h(\vx,t)^{\top}
\bigl(f(\vx)+g(\vx)\vu\bigr)
\geq-\alpha\bigl(h(\vx,t)\bigr).
\end{aligned}
\]
Any locally Lipschitz controller satisfying this CBF constraint renders $\calC(t)$ forward invariant, provided
$\vx(t_{0})\in\calC(t_{0})$~\cite{garg_advances_2024}.
Importantly, $h(\vx,t)\geq0$ alone does not guarantee feasibility
of this constraint under input limits~\cite{kim_how_2025}.

\subsection{Backup Control Barrier Functions}
\label{sec:backup_cbf}

For the standard Backup CBF
construction~\cite{chen_backup_2021,kim_backupbased_2026},
consider the time-invariant special case $\calC(t)\equiv\calC$.
Let $\calS_{0}\subseteq\calC$ be a terminal set forward invariant
under a continuously differentiable state-feedback law
$\pi_{\textup{b}}:\calX\rightarrow\calU$, and let $T>0$ be a
finite backup horizon. The states whose $\pi_{\textup{b}}$-rollouts
remain in $\calC$ over $[0,T]$ and reach $\calS_{0}$ at $T$ form
the recoverable set induced by $\pi_{\textup{b}}$.
Such a policy is referred to as a \emph{backup policy}, and its choice
affects the size of the induced recoverable set.
Practical designs include saturated lane-keeping
feedback~\cite{chen_backup_2021} and PD feedback toward a designated recovery
region~\cite{kim_backupbased_2026}.

The closed-loop backup dynamics are
$f_{\textup{b}}(\vx)=f(\vx)+g(\vx)\pi_{\textup{b}}(\vx)$.
Their flow map is denoted by $\varphi_{\textup{b}}(\vx,s)$, where $s\geq0$
is the backup-rollout time and
\begin{equation}
\frac{\od}{\od s}\varphi_{\textup{b}}(\vx,s)
=f_{\textup{b}}\big(\varphi_{\textup{b}}(\vx,s)\big),
\qquad
\varphi_{\textup{b}}(\vx,0)=\vx.
\label{eq:backup_flow}
\end{equation}
The sensitivity Jacobian
$\Phi_{\textup{b}}(\vx,s):=
\partial\varphi_{\textup{b}}(\vx,s)/\partial\vx$
satisfies
\begin{equation}
\frac{\od}{\od s}\Phi_{\textup{b}}(\vx,s)
=
\frac{\partial f_{\textup{b}}}{\partial\vx}
\big(\varphi_{\textup{b}}(\vx,s)\big)
\Phi_{\textup{b}}(\vx,s),
\quad
\Phi_{\textup{b}}(\vx,0)=I.
\label{eq:backup_sensitivity_dynamics}
\end{equation}
For a fixed choice of $\pi_{\textup{b}}$, $f_{\textup{b}}$ is autonomous.
Hence,
$\varphi_{\textup{b}}\big(\varphi_{\textup{b}}(\vx,r),s\big)
=\varphi_{\textup{b}}(\vx,r+s)$ for $r,s\geq0$. Differentiating this flow
property with respect to $r$ at $r=0$ yields
\begin{equation}
\Phi_{\textup{b}}(\vx,s)f_{\textup{b}}(\vx)
=
f_{\textup{b}}\big(\varphi_{\textup{b}}(\vx,s)\big).
\label{eq:flow_identity}
\end{equation}

\section{PROBLEM FORMULATION}
\label{sec:problem}

Consider a robot with dynamics in \eqref{eq:system}, commanded by a nominal controller $\vu_{\textup{nom}}$ and equipped with a sensor with limited and potentially occluded field of view. We seek to modify the nominal controller to ensure collision avoidance against dynamic obstacles that may
remain unobserved until they enter the sensor's visible region. Assume every
dynamic obstacle has a known speed bound $\bar v_{\textup{o}}>0$, so its
displacement over any interval of length $s\in[0,T]$ is contained in
$\mathcal{B}(\bar v_{\textup{o}}s)$, where
$\mathcal{B}(r)\subseteq\mathbb{R}^{d}$ is the closed Euclidean ball of
radius $r$.

At time $t$, let
$\calO(P(\vx(t)),t)\subseteq\calW$ denote the set of workspace positions
at which an undetected obstacle may be located, consistently with the current
sensing geometry and observations. An occupancy predictor represents this
set as a finite union of occluded regions:
\begin{equation}
\calO\big(P(\vx(t)),t\big)
=
\bigcup_{j=1}^{M(t)} 
\calO^{(j)}\big(P(\vx(t)),t\big),
\label{eq:occluded_region_union}
\end{equation}
where $M(t)$ is the number of occluded regions returned by the
predictor. Hereafter, we write $M$ when its dependence on $t$ is clear. For each
$j=1,\ldots,M$,
the predictor returns a collision-inflated reachable-occupancy set
$\widehat{\calH}^{(j)}(t,s)$ satisfying
\begin{equation}
\calO^{(j)}\big(P(\vx(t)),t\big)
\oplus
\mathcal{B}\!\left(
\bar v_{\textup{o}}s+r_{\textup{col}}
\right)
\subseteq
\widehat{\calH}^{(j)}(t,s).
\label{eq:inflated_occupancy}
\end{equation}
Here, $\oplus$ denotes the Minkowski sum, and $r_{\textup{col}}\geq0$ is the collision-inflation radius
accounting for the robot and obstacle footprints, localization and
perception uncertainty, and any prescribed clearance.

Let $\{t_k\}_{k\geq0}$ be the times at which new observations are
incorporated and the occupancy prediction is recomputed. On each interval
$[t_k,t_{k+1})$, $M$ and the set indexing are fixed.

\begin{assumption}[Occupancy coverage and temporal consistency]
\label{ass:prediction}
For every $t\geq0$, $s\in[0,T]$, and $j=1,\ldots,M$,
\eqref{eq:inflated_occupancy} holds. Moreover, on each
interval $[t_k,t_{k+1})$, for every $\delta\geq0$ such that
$t,t+\delta\in[t_k,t_{k+1})$ and $s,s+\delta\in[0,T]$,
\begin{equation}
\widehat{\calH}^{(j)}(t+\delta,s)
\subseteq
\widehat{\calH}^{(j)}(t,s+\delta).
\label{eq:set_consistency}
\end{equation}
\end{assumption}

Condition~\eqref{eq:set_consistency} compares predictions for the same
absolute future time because $(t+\delta)+s=t+(s+\delta)$, i.e., it prevents a
previously certified occupancy set from enlarging as the horizon shifts. Any discontinuous recomputation at $t_k$ requires post-update recertification of the current state.

\begin{problem}
Given the robot dynamics in \eqref{eq:system}, the nominal controller
$\vu_{\textup{nom}}$, and an occupancy predictor satisfying Assumption~\ref{ass:prediction}, construct a minimally invasive safety
filter that remains feasible at all subsequent times on its certified
recoverable set under $\vu\in\calU$ and guarantees collision avoidance for
every hidden-obstacle motion covered by the occupancy prediction.
\label{problem:occlusion-cbf}
\end{problem}

\begin{figure}[t]
    \centering
    \includegraphics[width=0.99\linewidth]{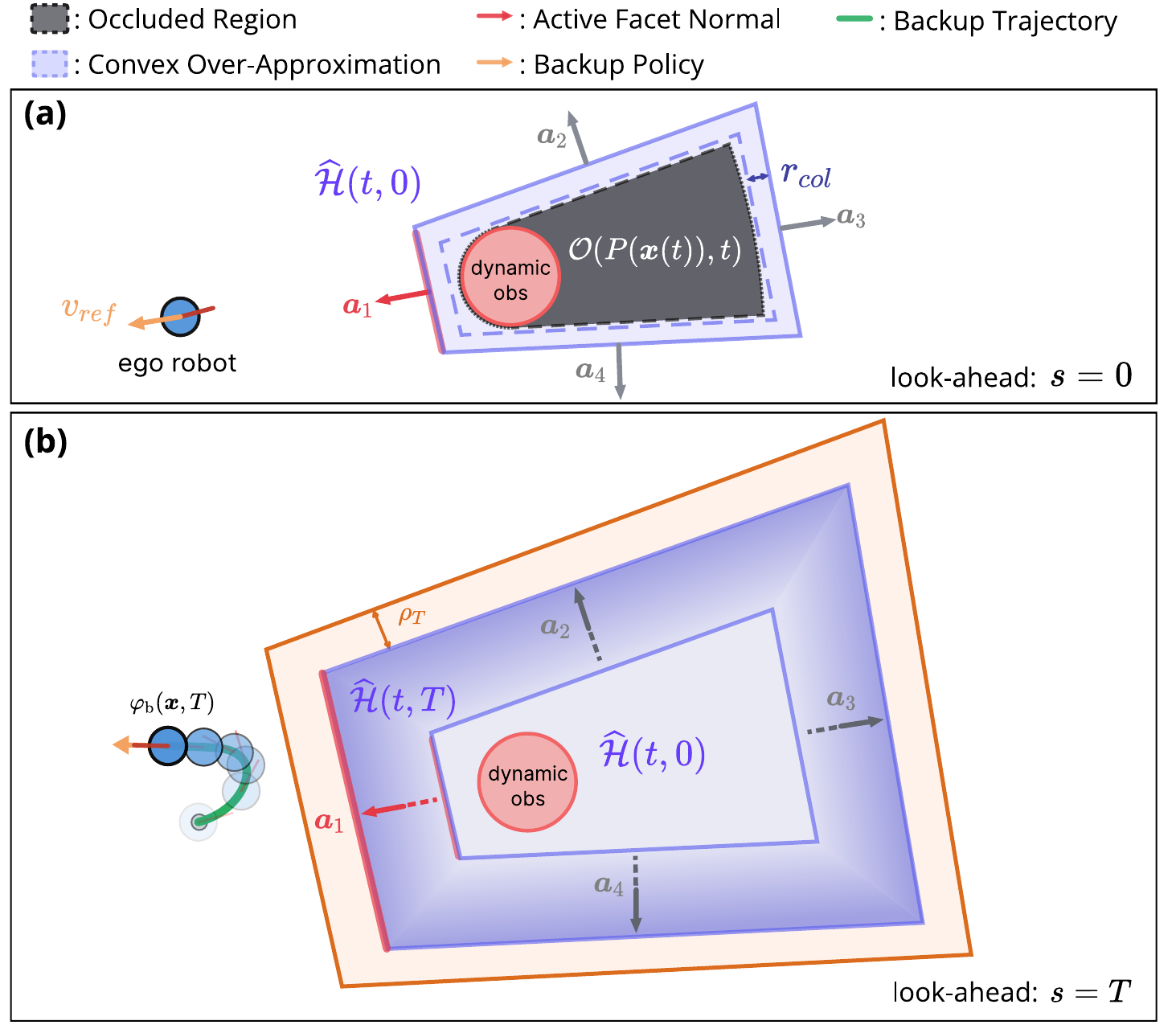}
    \caption{Geometric interpretation for one occluded region.
    (a) The occluded region is enclosed by
    $\widehat{\calO}^{(j)}(t)$ and inflated by $r_{\textup{col}}$ to
    $\widehat{\calH}^{(j)}(t,0)$; $h_j^C(\vy,t,0)$ is constructed from its
    facet margins. (b) The occupancy expands with the look-ahead time $s$ while the
    backup rollout remains outside it and reaches $\calS_0(t+T)$ with terminal
    clearance $\rho_T$.}
    \label{fig:ocbf_geometry}
\vspace{-5pt}
\end{figure}

\section{OCCLUSION CBF}
\label{sec:method}

Physical separation from the predicted occupancy at the current time does not
by itself ensure recoverability, which depends jointly on the robot dynamics,
input limits, and the future evolution of the hidden-obstacle occupancy. We
therefore develop \textbf{\emph{OcclusionCBF}} to certify a finite-horizon backup maneuver
that remains outside the time-varying reachable occupancy and terminates in a
verified terminal set. Preserving this certificate ensures recoverability and
collision avoidance for all hidden-obstacle motions covered by the prediction.

\subsection{Polyhedral Reachable-Occupancy Margin}
\label{sec:polytope}

Let $\vy\in\calX$ denote a dummy state variable along the backup rollout, so that $\vy=\varphi_{\textup{b}}(\vx,s)$.
For each reachable-occupancy set, let
$h_{j}^{C}:\calX\times\mathbb{R}_{\geq0}\times[0,T]\rightarrow\mathbb{R}$
be a differentiable conservative separation margin satisfying
\begin{equation}
h_{j}^{C}(\vy,t,s)\geq0
\quad\Longrightarrow\quad
P(\vy)\notin
\operatorname{int}\!\left(
\widehat{\calH}^{(j)}(t,s)
\right).
\label{eq:margin_certificate}
\end{equation}
To ensure temporal consistency of the occupancy margin~$h_{j}^{C}$ under the
advancing prediction horizon (as $t$ increases and the corresponding
look-ahead time $s$ decreases; cf. Assumption~\ref{ass:prediction}), we require, on each interval
$[t_k,t_{k+1})$,
\begin{equation}
\frac{\partial h_{j}^{C}}{\partial t}(\vy,t,s)
-
\frac{\partial h_{j}^{C}}{\partial s}(\vy,t,s)
\geq0.
\label{eq:differential_consistency}
\end{equation}
The occluded regions $\calO^{(j)}$ are supplied by a sensing or
geometric-processing module and need not be polyhedral, as
illustrated in \autoref{fig:ocbf_geometry}.
We construct conservative polyhedral outer approximations and
derive $h_j^C$ from their facet margins.
For polygonal regions, such an approximation can be obtained
by applying a standard convex-hull algorithm to their vertices.

At an update time $t_k$, enclose the $j$-th occluded region by the convex
polytope
\begin{equation}
\begin{aligned}
& \calO^{(j)}\big(P(\vx(t_k)),t_k\big)
\subseteq\widehat{\calO}^{(j)}(t_k) \\
= & \Bigl\{\vq\in\mathbb{R}^{d}\,\Bigm| \va_{j\ell}(t_k)^{\top}\vq
\leq b_{j\ell}(t_k), \ell=1,\ldots,K_j\Bigr\},
\end{aligned}
\label{eq:predictor_polytope}
\end{equation}
where $\|\va_{j\ell}(t_k)\|_{2}=1$. For
$t\in[t_k,t_{k+1})$, retain $K_j$ and set
$\va_{j\ell}(t)=\va_{j\ell}(t_k)$ and
$b_{j\ell}(t)=b_{j\ell}(t_k)+\bar v_{\textup{o}}(t-t_k)$.
The same half-space form then defines $\widehat{\calO}^{(j)}(t)$ as the
$t_k$ occupancy estimate propagated by the elapsed-time motion bound. Adding the look-ahead expansion
and collision radius gives
\begin{equation}
\begin{split}
\raisetag{3.0ex}
\widehat{\calH}^{(j)}(t,s)
=
\Big\{
\vq\in\mathbb{R}^{d}\ \Big|\
&\va_{j\ell}(t)^{\top}\vq
\leq b_{j\ell}(t)+r_{\textup{col}}+\bar v_{\textup{o}}s,\\
&\ell=1,\ldots,K_j
\Big\}.
\end{split}
\label{eq:expanded_polytope}
\end{equation}
The elapsed-time expansion in $b_{j\ell}(t)$ covers motion since $t_k$,
while $\bar v_{\textup{o}}s+r_{\textup{col}}$ covers future motion and
collision clearance; hence \eqref{eq:expanded_polytope} satisfies
\eqref{eq:inflated_occupancy}. In general, it is a polyhedral outer
approximation of the corresponding Minkowski sum. Because
$b_{j\ell}(t+\delta)=b_{j\ell}(t)+\bar v_{\textup{o}}\delta$ between
updates,
$\widehat{\calH}^{(j)}(t+\delta,s)
=\widehat{\calH}^{(j)}(t,s+\delta)$, so
\eqref{eq:set_consistency} holds with equality.

For a robot state $\vy$, define the facet margins
\begin{equation}
\psi_{j\ell}(\vy,t,s)
=
\va_{j\ell}(t)^{\top}P(\vy)
-b_{j\ell}(t)
-r_{\textup{col}}
-\bar v_{\textup{o}}s.
\label{eq:face_margin}
\end{equation}
The robot position lies outside the interior of
$\widehat{\calH}^{(j)}(t,s)$ if and only if
\[
\max_{\ell=1,\ldots,K_j}
\psi_{j\ell}(\vy,t,s)
\geq0.
\]
To obtain a differentiable sufficient condition, define the shifted
log-sum-exp margin~\cite{molnar_composing_2023}
\begin{equation}
h_{j}^{C}(\vy,t,s)
=
\frac{1}{\kappa}
\log\left(
\sum_{\ell=1}^{K_{j}}
\exp\big(
\kappa\psi_{j\ell}(\vy,t,s)
\big)
\right)
-\frac{\log K_{j}}{\kappa},
\label{eq:smooth_margin}
\end{equation}
where $\kappa>0$. This function satisfies
\[
\max_{\ell}\psi_{j\ell}
-\frac{\log K_j}{\kappa}
\leq
h_{j}^{C}
\leq
\max_{\ell}\psi_{j\ell}.
\]
The bound quantifies the smoothing conservatism as
$0\leq\max_{\ell}\psi_{j\ell}-h_j^C\leq\log K_j/\kappa$; hence,
$h_j^C$ satisfies \eqref{eq:margin_certificate}. Separation is imposed
for every $j=1,\ldots,M$.

For notational compactness, define the softmax weights
\begin{equation}
\lambda_{j\ell}(\vy,t,s)
=
\frac{
\exp\big(
\kappa\psi_{j\ell}(\vy,t,s)
\big)
}{
\sum_{r=1}^{K_j}
\exp\big(
\kappa\psi_{jr}(\vy,t,s)
\big)
},
\qquad
\sum_{\ell=1}^{K_j}\lambda_{j\ell}=1. \nonumber
\label{eq:softmax_weight}
\end{equation}
Between updates, the derivatives used below are
\begin{subequations}
\label{eq:smooth_derivatives}
\begin{align}
\nabla_{\vy}h_{j}^{C}
&=
\left(
\frac{\partial P(\vy)}{\partial\vy}
\right)^{\top}
\sum_{\ell=1}^{K_j}
\lambda_{j\ell}\va_{j\ell},
\\
\frac{\partial h_{j}^{C}}{\partial s}
&=
\frac{\partial h_{j}^{C}}{\partial t}
=
-\bar v_{\textup{o}}.
\label{eq:facet_offset_evolution}
\end{align}
\end{subequations}
The last equality follows from
$\dot{\va}_{j\ell}=0$ and
$\dot b_{j\ell}=\bar v_{\textup{o}}$. Consequently,
\eqref{eq:differential_consistency} also holds with equality. Changes in facet number or geometry at $t_k$ are handled by
recertifying the current state after the update.

\subsection{Backup Rollout and Terminal Set}
\label{sec:backup_terminal}

Having constructed differentiable occupancy margins, we next adapt the
Backup CBF formulation in \autoref{sec:backup_cbf} to the
moving-horizon setting of Problem~\ref{problem:occlusion-cbf}. The backup rollout must remain outside every time-varying reachable-occupancy set and terminate in a time-varying set that is forward invariant under the backup policy.

On each update interval $[t_k,t_{k+1})$, the backup policy
$\pi_{\textup{b}}$ is held fixed, with corresponding closed-loop flow
$\varphi_{\textup{b}}$, although it may change at the next update. We
suppress this interval dependence in the notation and use the same policy
for all simultaneously enforced occupancy sets.
For each set index $j$, let
$h_{j}^{S}:\calX\times[T,\infty)\rightarrow\mathbb{R}$ be a
continuously differentiable terminal barrier satisfying, for
all $\vy\in\calX$, $q\geq T$ and $j=1,\ldots,M$,
\begin{equation}
h_{j}^{S}(\vy,q)\geq0 \quad 
\Longrightarrow \quad
h_{j}^{C}(\vy,q-T,T)\geq\rho_{T},
\label{eq:terminal_barrier}
\end{equation}
where $\rho_{T}\geq0$ is an additional terminal clearance
(see \autoref{sec:simulation} for a model-specific selection).
The corresponding time-varying terminal set is
$\calS_{0}(q)=
\{\vy\in\calX\mid h_j^S(\vy,q)\geq0,\ j=1,\ldots,M\}$.

\begin{assumption}[Verified terminal set]
\label{ass:terminal}
There exists an extended class-$\calK$ function $\alpha_{S}$ such that,
for every $q\geq T$, every $\vy\in\calS_{0}(q)$, and every
$j=1,\ldots,M$,
\begin{equation}
\frac{\partial h_{j}^{S}}{\partial q}(\vy,q)
+
\big(
\nabla_{\vy}h_{j}^{S}(\vy,q)
\big)^{\top}
f_{\textup{b}}(\vy)
\geq
-\alpha_{S}\big(
h_{j}^{S}(\vy,q)
\big).
\label{eq:terminal_invariance}
\end{equation}
\end{assumption}

Assumption~\ref{ass:terminal} renders
$\calS_{0}(q)$ forward invariant under the backup policy between
discontinuous prediction updates. To express the terminal condition in
terms of the current state, define the endpoint margins
\begin{equation}
\eta_{j}(\vx,t)
:=
h_{j}^{S}\big(
\varphi_{\textup{b}}(\vx,T),t+T
\big),
\qquad
j=1,\ldots,M.
\label{eq:eta_definition}
\end{equation}
Thus, $\varphi_{\textup{b}}(\vx,T)\in\calS_0(t+T)$ if and only if
$\eta_j(\vx,t)\geq0$ for all $j=1,\ldots,M$.
Moreover, by \eqref{eq:terminal_barrier},
\begin{equation}
\eta_{j}(\vx,t)\geq0
\,\Longrightarrow\,
h_{j}^{C}\big(
\varphi_{\textup{b}}(\vx,T),t,T
\big)
\geq
\rho_{T}
\geq0.
\label{eq:terminal_compatibility}
\end{equation}
Thus, the terminal condition implies the corresponding occupancy
constraint at the end of the backup horizon and, when $\rho_T>0$,
retains additional clearance from the horizon occupancy. The occupancy and terminal constructions are illustrated in
\autoref{fig:ocbf_geometry}.

\subsection{Occlusion-Aware Recoverable Set and OCBF-QP}

We now define the time-varying recoverable set induced by the
backup policy and enforce its occupancy and terminal
conditions through a QP.

For $M\geq1$, define the occlusion-aware recoverable set
\begin{equation}
\begin{split}
\raisetag{3.0ex}
\calS(t)=\big\{\vx\in\calX\ \big|\ &
h_{j}^{C}\big(
\varphi_{\textup{b}}(\vx,s),t,s
\big)\geq0,
\quad
\forall s\in[0,T],\ \forall j,
\\
&\eta_{j}(\vx,t)\geq0,
\quad
\forall j
\big\}.
\end{split}
\label{eq:recoverable_set}
\end{equation}

We impose separate CBF inequalities on the occupancy and terminal
conditions defining $\calS(t)$.
For an absolute future time
$\tau\in[t,t+T]$, define
\begin{equation}
\begin{aligned}
s&=\tau-t,
\qquad
\vy=\varphi_{\textup{b}}(\vx,s),
\\
\xi_{j}^{C}(\vx,t;\tau)
&=
h_{j}^{C}(\vy,t,s).
\end{aligned}
\label{eq:xi_definition}
\end{equation}
Because $\od s/\od t=-1$ when $\tau$ is fixed,
\begin{equation}
\begin{split}
\dot{\xi}_{j}^{C}
=&\
\big(
\nabla_{\vy}h_{j}^{C}
\big)^{\top}
\left[
\Phi_{\textup{b}}(\vx,s)
\big(
f(\vx)+g(\vx)\vu
\big)
-f_{\textup{b}}(\vy)
\right]
\\
&+
\frac{\partial h_{j}^{C}}{\partial t}
-
\frac{\partial h_{j}^{C}}{\partial s}.
\end{split}
\label{eq:xi_derivative}
\end{equation}
For the polyhedral occupancy model, \eqref{eq:smooth_derivatives} gives
$\partial h_j^C/\partial t=\partial h_j^C/\partial s$, so the final two
terms in \eqref{eq:xi_derivative} cancel. 

The terminal margins defined in
\eqref{eq:eta_definition} satisfy
\begin{equation}
\dot{\eta}_{j}
=
\big(
\nabla_{\vy}h_{j}^{S}
\big)^{\top}
\Phi_{\textup{b}}(\vx,T)
\big(
f(\vx)+g(\vx)\vu
\big)
+
\frac{\partial h_{j}^{S}}{\partial q},
\label{eq:eta_derivative}
\end{equation}
where $h_j^S$ and its derivatives are evaluated at
$\big(\varphi_{\textup{b}}(\vx,T),t+T\big)$.

For notational compactness, define
\begin{equation}\label{eq:affine_terms}
\begin{split}
\raisetag{5.0ex}
A_{j}^{C}
&=
\big(
\nabla_{\vy}h_{j}^{C}
\big)^{\top}
\Phi_{\textup{b}}(\vx,s)g(\vx),
\\
c_{j}^{C}
&=
\big(
\nabla_{\vy}h_{j}^{C}
\big)^{\top}
\left[
\Phi_{\textup{b}}(\vx,s)f(\vx)
-f_{\textup{b}}(\vy)
\right]
+
\frac{\partial h_{j}^{C}}{\partial t}
-
\frac{\partial h_{j}^{C}}{\partial s}, 
\\
A_{j}^{S}
&=
\big(
\nabla_{\vy}h_{j}^{S}
\big)^{\top}
\Phi_{\textup{b}}(\vx,T)g(\vx), 
\\
c_{j}^{S}
&=
\big(
\nabla_{\vy}h_{j}^{S}
\big)^{\top}
\Phi_{\textup{b}}(\vx,T)f(\vx)
+
\frac{\partial h_{j}^{S}}{\partial q}. 
\end{split}
\end{equation}
In \eqref{eq:affine_terms}, the occupancy quantities are evaluated at
$(\vy,t,s)$ with $s=\tau-t$, while the terminal quantities are
evaluated at
$\big(\varphi_{\textup{b}}(\vx,T),t+T\big)$. Their dependence on
$(\vx,t;\tau)$ and $(\vx,t)$ is suppressed, respectively.

Let $\alpha_{C}$ be a locally Lipschitz extended class-$\calK$
function. The proposed Occlusion CBF QP~(OCBF-QP) is
\begin{equation}
\begin{split}
\raisetag{11.0ex}
\vu^{\star}(\vx,t)
=&\
\argmin_{\vu\in\calU}
\frac{1}{2}
\left\|
\vu-\vu_{\textup{nom}}(\vx,t)
\right\|_{W}^{2}
\\
\textup{s.t.}\quad
&
c_{j}^{C}+A_{j}^{C}\vu
\geq
-\alpha_{C}\big(\xi_{j}^{C}\big),
\quad
\forall j,\ 
\forall\tau\in[t,t+T],
\\
&
c_{j}^{S}+A_{j}^{S}\vu
\geq
-\alpha_{S}\big(\eta_{j}\big),
\quad
\forall j,
\end{split}
\label{eq:oabcbf_qp}
\end{equation}
where $W\succ0$. All constraints are affine in $\vu$.

\begin{figure*}[t]
    \centering
    \includegraphics[width=0.99\linewidth]{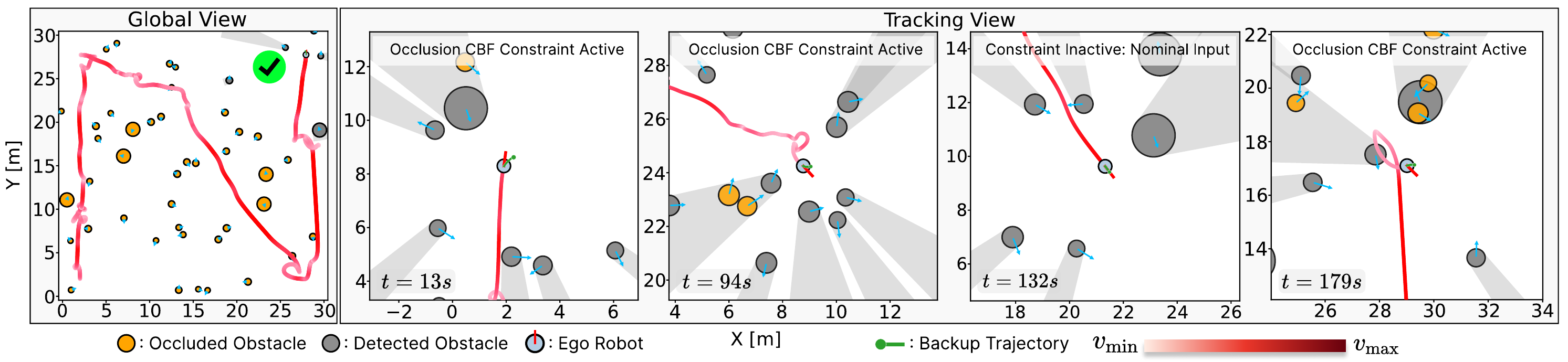}
    \caption{Representative successful OCBF-QP trial for the double
    integrator with $N_{\textup{obs}}=50$. The global view shows the
    completed waypoint route, while the local views show representative
    occlusion interactions and backup rollouts.}
    \label{fig:ocbf_snapshot}
    \vspace{-5pt}
\end{figure*}

\subsection{Recursive Feasibility and Safety Guarantees}

At an update time $t_k$, the occupancy prediction and backup-policy
parameters may change discontinuously, and hence so may the recoverable
set. We therefore require the continuous physical state to satisfy
\begin{equation}
\vx(t_k)\in\calS(t_k^{+}).
\label{eq:update_membership}
\end{equation}
This is a pointwise condition on the realized state and does not require
$\calS(t_k^{-})\subseteq\calS(t_k^{+})$.

The following lemma establishes pointwise feasibility of the OCBF-QP on
the recoverable set. The subsequent theorem shows that the QP constraints
preserve this set, yielding recursive feasibility and safety.

\begin{lemma}[Pointwise feasibility of the OCBF-QP]
\label{lem:qp_feasibility}
Suppose Assumptions~\ref{ass:prediction} and~\ref{ass:terminal} hold,
and the occupancy margins satisfy
\eqref{eq:differential_consistency}. Assume that $f$, $g$, $P$, $\pi_{\textup{b}}$, $h_j^C$, and $h_j^S$
are continuously differentiable between discontinuous updates, that
$\pi_{\textup{b}}(\vx)\in\calU$, and that the backup flow exists over
$[0,T]$. Then, for every time $t$ between updates and every
$\vx\in\calS(t)$, the backup input
$\vu=\pi_{\textup{b}}(\vx)$ satisfies all constraints of
\eqref{eq:oabcbf_qp}. Consequently, the OCBF-QP is feasible at every
state in $\calS(t)$.
\end{lemma}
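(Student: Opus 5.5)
Substitute $\vu=\pi_{\textup{b}}(\vx)$ into each constraint of \eqref{eq:oabcbf_qp}. The drift then collapses to $f_{\textup{b}}(\vx)$, and the flow identity \eqref{eq:flow_identity} turns $\Phi_{\textup{b}}(\vx,s)f_{\textup{b}}(\vx)$ into $f_{\textup{b}}(\varphi_{\textup{b}}(\vx,s))$. Admissibility $\vu\in\calU$ is immediate from the hypothesis $\pi_{\textup{b}}(\vx)\in\calU$. The differentiability hypotheses between updates make every quantity in \eqref{eq:affine_terms} well defined, including the sensitivity $\Phi_{\textup{b}}$ via \eqref{eq:backup_sensitivity_dynamics}, since the flow exists on $[0,T]$.

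\textbf{Occupancy constraints.} Fix $j$ and $\tau\in[t,t+T]$, and set $s=\tau-t\in[0,T]$ and $\vy=\varphi_{\textup{b}}(\vx,s)$. With $\vu=\pi_{\textup{b}}(\vx)$,
\[
c_j^C+A_j^C\vu=(\nabla_{\vy}h_j^C)^{\top}\bigl[\Phi_{\textup{b}}(\vx,s)f_{\textup{b}}(\vx)-f_{\textup{b}}(\vy)\bigr]+\frac{\partial h_j^C}{\partial t}-\frac{\partial h_j^C}{\partial s}.
\]
By \eqref{eq:flow_identity} the bracket vanishes, so the left side equals $\partial_t h_j^C-\partial_s h_j^C$, which is $\geq 0$ by \eqref{eq:differential_consistency}. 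Since $\vx\in\calS(t)$, we have $\xi_j^C=h_j^C(\vy,t,s)\geq0$, hence $-\alpha_C(\xi_j^C)\leq0$ because $\alpha_C$ is extended class-$\calK$. The constraint therefore holds for every $j$ and every $\tau$. The same computation shows that the backup rollout observed at fixed absolute time $\tau$ is stationary, which is the intuition behind the whole argument.

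\textbf{Terminal constraints.} With $\vu=\pi_{\textup{b}}(\vx)$ and $\vy_T:=\varphi_{\textup{b}}(\vx,T)$,
\[
c_j^S+A_j^S\vu=(\nabla_{\vy}h_j^S)^{\top}\Phi_{\textup{b}}(\vx,T)f_{\textup{b}}(\vx)+\partial_q h_j^S=(\nabla_{\vy}h_j^S)^{\top}f_{\textup{b}}(\vy_T)+\partial_q h_j^S,
\]
evaluated at $(\vy_T,t+T)$, again using \eqref{eq:flow_identity}. Since $\vx\in\calS(t)$ gives $\eta_j(\vx,t)\geq0$ for all $j$, the endpoint satisfies $\vy_T\in\calS_0(t+T)$ with $q=t+T\geq T$. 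Assumption~\ref{ass:terminal} then yields exactly $\geq-\alpha_S(h_j^S(\vy_T,t+T))=-\alpha_S(\eta_j)$.

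\textbf{Conclusion.} Both constraint families are satisfied by the admissible input $\pi_{\textup{b}}(\vx)$, so the feasible set of the OCBF-QP is nonempty. Existence of the minimizer also uses that the feasible set is closed: it is an intersection of closed half-spaces with the compact set $\calU$. A minimizer of the strictly convex objective (recall $W\succ0$) then exists.

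The main delicate step is the terminal one. Assumption~\ref{ass:terminal} is stated only for $\vy\in\calS_0(q)$, so membership of the endpoint must come from all $\eta_j\geq0$ simultaneously, not from the single index $j$. Recognizing that the flow identity removes the $\Phi_{\textup{b}}$-dependence is the key technical observation.
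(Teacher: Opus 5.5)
Your proposal is correct and follows the paper's proof essentially step for step. You substitute $\vu=\pi_{\textup{b}}(\vx)$ and use \eqref{eq:flow_identity} to cancel the state-dependent terms. The occupancy constraints then follow from \eqref{eq:differential_consistency} together with $\xi_j^C\geq0$, and the terminal constraints follow from the joint membership $\varphi_{\textup{b}}(\vx,T)\in\calS_0(t+T)$ (obtained from all $\eta_j\geq0$) plus Assumption~\ref{ass:terminal}. Your added remark on existence of a minimizer, via compactness of $\calU$ and closedness of the semi-infinite half-space family, goes slightly beyond what the paper states but is sound.
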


\begin{proof}
Fix $\vx\in\calS(t)$ and choose
$\vu=\pi_{\textup{b}}(\vx)$. For every
$\tau\in[t,t+T]$, the flow identity~\eqref{eq:flow_identity} cancels the
state-dependent terms in~\eqref{eq:xi_derivative}, giving
\[
\left.
\dot{\xi}_j^C
\right|_{\vu=\pi_{\textup{b}}(\vx)}
=
\frac{\partial h_j^C}{\partial t}
-
\frac{\partial h_j^C}{\partial s}
\geq0
\]
by~\eqref{eq:differential_consistency}. Since
$\xi_j^C\geq0$ for $\vx\in\calS(t)$, this satisfies every occupancy
constraint in~\eqref{eq:oabcbf_qp}. Similarly, $\eta_j(\vx,t)\geq0$ for every $j$ implies
$\varphi_{\textup{b}}(\vx,T)\in\calS_0(t+T)$ by the definition of $\calS_0$ in Section \ref{sec:backup_terminal}. Applying
\eqref{eq:flow_identity} to~\eqref{eq:eta_derivative} and invoking
\eqref{eq:terminal_invariance} yields
\[
\left.
\dot{\eta}_j
\right|_{\vu=\pi_{\textup{b}}(\vx)}
\geq
-\alpha_S(\eta_j),
\qquad
j=1,\ldots,M.
\]
Thus, the backup input also satisfies every terminal constraint, proving
the claim of Lemma \ref{lem:qp_feasibility}.
\end{proof}

\begin{theorem}[Recursive feasibility of the OCBF-QP]
\label{thm:main}
Suppose the conditions of Lemma~\ref{lem:qp_feasibility} hold. 
If $\vx(t_0)\in\calS(t_0)$ and~\eqref{eq:update_membership} holds at every
discontinuous update, then every piecewise locally Lipschitz control
satisfying the OCBF-QP constraints and yielding a forward-complete
solution satisfies, for all $t\geq t_0$ and $j=1,\ldots,M(t)$,
\begin{equation}
\vx(t)\in\calS(t),
\qquad
h_j^C(\vx(t),t,0)\geq0.
\label{eq:main_safety_result}
\end{equation}
Consequently, the OCBF-QP is recursively feasible, and the robot avoids
all hidden-obstacle motions covered by the occupancy prediction.
\end{theorem}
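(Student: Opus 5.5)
The argument runs interval by interval. On each inter-update interval $[t_k,t_{k+1})$ the predictor, the polytopes, $M$ and $\pi_{\textup{b}}$ are fixed, and all data are continuously differentiable. At the left endpoint we have $\vx(t_k)\in\calS(t_k^{+})$, from $\vx(t_0)\in\calS(t_0)$ for $k=0$ and from \eqref{eq:update_membership} for $k\geq1$. We will show that $\vx(t)\in\calS(t)$ for all $t\in[t_k,t_{k+1})$. Forward completeness then lets us patch the intervals together, and induction on $k$ gives the invariance claim for all $t\geq t_0$.

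Within one interval, we treat the two families of conditions defining $\calS(t)$ separately.

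\emph{Terminal conditions.} For each $j$, the scalar function $t\mapsto\eta_j(\vx(t),t)$ is absolutely continuous along the piecewise locally Lipschitz closed loop. Its derivative is given by \eqref{eq:eta_derivative}, and the QP constraint gives $\dot\eta_j\geq-\alpha_S(\eta_j)$ almost everywhere. The comparison lemma for extended class-$\calK$ functions, starting from $\eta_j(\vx(t_k),t_k)\geq0$, then yields $\eta_j\geq0$ throughout the interval.

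\emph{Occupancy conditions.} This family is indexed by absolute time, which is where care is needed. Fix $\tau$. For $t\in[\max(t_k,\tau-T),\tau]$, the function $t\mapsto\xi_j^C(\vx(t),t;\tau)$ has derivative \eqref{eq:xi_derivative}, and the QP constraint gives $\dot\xi_j^C\geq-\alpha_C(\xi_j^C)$. The same comparison argument shows that $\xi_j^C$ stays nonnegative as long as it starts nonnegative. Take any $t$ in the interval and any $s\in[0,T]$, and set $\tau=t+s$. If $\tau-T\leq t_k$, the starting value is $\xi_j^C$ evaluated at $t_k$, with look-ahead $\tau-t_k\in[0,T]$; it is nonnegative because $\vx(t_k)\in\calS(t_k^+)$. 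Hence $h_j^C(\varphi_{\textup{b}}(\vx(t),s),t,s)\geq0$.

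The main obstacle is the remaining case $\tau>t_k+T$. Then the absolute time $\tau$ was not inside the horizon at $t_k$, and the constraint for $\tau$ only becomes active at $t'=\tau-T$, with look-ahead $T$. There we need $h_j^C(\varphi_{\textup{b}}(\vx(t'),T),t',T)\geq0$, and this is supplied by the terminal condition. We have already shown $\eta_j(\vx(t'),t')\geq0$, so \eqref{eq:terminal_compatibility} gives $h_j^C(\varphi_{\textup{b}}(\vx(t'),T),t',T)\geq\rho_T\geq0$. This is exactly the initial condition needed for the $\tau$-indexed comparison argument, which then propagates nonnegativity up to $t=\tau$. So the terminal invariance is what seeds each newly entering horizon time, and it has to be proved first.

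This step needs a technical check: the constraint for each $\tau$ is imposed pointwise in $\tau$ while the argument uses it along a trajectory. Because the QP enforces the inequality for all $\tau\in[t,t+T]$ at every $t$, for fixed $\tau$ it holds at every $t\in[\tau-T,\tau]$, which is what the comparison lemma requires. Assumption~\ref{ass:prediction} and \eqref{eq:differential_consistency} enter only through Lemma~\ref{lem:qp_feasibility}, and they make \eqref{eq:xi_derivative} the correct total derivative. Combining the two families gives $\vx(t)\in\calS(t)$ on the whole interval.

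Finally, taking $s=0$ in the definition of $\calS(t)$ gives $\varphi_{\textup{b}}(\vx,0)=\vx$, so $h_j^C(\vx(t),t,0)\geq0$. By \eqref{eq:margin_certificate}, $P(\vx(t))\notin\operatorname{int}\widehat{\calH}^{(j)}(t,0)$. The set $\widehat{\calH}^{(j)}(t,0)$ contains $\calO^{(j)}\oplus\mathcal{B}(r_{\textup{col}})$ by \eqref{eq:inflated_occupancy}, so the robot keeps distance at least $r_{\textup{col}}$ from every hidden-obstacle position covered by the prediction. Recursive feasibility then follows from $\vx(t)\in\calS(t)$ together with Lemma~\ref{lem:qp_feasibility}.
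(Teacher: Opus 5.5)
Your proposal is correct and follows essentially the same route as the paper. On each update-free interval you preserve the terminal margins $\eta_j$ and the fixed-$\tau$ occupancy margins $\xi_j^C$ by the scalar comparison argument, seed each newly entering horizon time through \eqref{eq:terminal_compatibility}, reinitialize via \eqref{eq:update_membership}, and read off safety at $s=0$.

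One minor correction: \eqref{eq:xi_derivative} is the correct total derivative by the chain rule alone (with $\od s/\od t=-1$), so \eqref{eq:differential_consistency} is needed only inside Lemma~\ref{lem:qp_feasibility}. Also, the coverage part of Assumption~\ref{ass:prediction} is used directly in your final safety step, not only through the lemma as you state.
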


\begin{proof}
Consider an update-free interval beginning at $\bar t$, where
$\bar t=t_0$ or $\bar t=t_k^{+}$. By the initial-condition assumption when
$\bar t=t_0$, and by \eqref{eq:update_membership} when
$\bar t=t_k^{+}$, we have $\vx(\bar t)\in\calS(\bar t)$. Hence, by the
definition of $\calS(\bar t)$,
\begin{equation*}
\xi_{j}^{C}(\vx(\bar t),\bar t;\tau)\geq0,
\quad
\forall \tau\in[\bar t,\bar t+T],
\end{equation*}
and $\eta_j(\vx(\bar t),\bar t)\geq0$ for every $j$.

For each fixed absolute future time $\tau$, while
$\tau\in[t,t+T]$, the OCBF-QP enforces
\[
\dot{\xi}_j^C
\geq
-\alpha_C(\xi_j^C),
\qquad
\dot{\eta}_j
\geq
-\alpha_S(\eta_j).
\]
The scalar comparison argument therefore preserves every occupancy and
terminal margin that is initialized nonnegative.

It remains to initialize the occupancy constraints that enter through
the advancing end of the horizon. A future time $\tau$ enters at
$t=\tau-T$, where $s=T$. Since $\eta_j(\vx(t),t)\geq0$,
\eqref{eq:terminal_compatibility} gives
\[
\begin{aligned}
\xi_j^C(\vx(t),t;t+T)
&=
h_j^C\big(
\varphi_{\textup{b}}(\vx(t),T),t,T
\big)
\\
&\geq
\rho_T
\geq0.
\end{aligned}
\]
Thus, every occupancy margin is nonnegative when it enters the horizon
and remains nonnegative while it is enforced, and every terminal margin
also remains nonnegative. Hence, all conditions defining $\calS(t)$ are
preserved between updates. Condition~\eqref{eq:update_membership}
reinitializes the same argument after each discontinuous update.

Therefore, $\vx(t)\in\calS(t)$ for all $t\geq t_0$. By
Lemma~\ref{lem:qp_feasibility}, the OCBF-QP remains feasible along the
resulting trajectory, establishing recursive feasibility. Finally,
setting $s=0$ in~\eqref{eq:recoverable_set} and using
$\varphi_{\textup{b}}(\vx,0)=\vx$ gives
$h_j^C(\vx(t),t,0)\geq0$ for every $j$. Together with
\eqref{eq:margin_certificate} and
Assumption~\ref{ass:prediction}, this ensures that $P(\vx(t))$ remains
outside every collision-inflated occupancy covering the admissible
hidden-obstacle motions.
\end{proof}

\subsection{Practical Implementation}

In implementation, the look-ahead interval is discretized as $0=s_0<\cdots<s_N=T$, and the trajectory constraints are enforced at
$\tau_i=t+s_i$, $i=0,\ldots,N$. The continuous-time guarantees above
apply to the ideal constraint family over all $s\in[0,T]$; the implemented
QP enforces only the sampled constraints and therefore does not exclude
inter-sample violations. Related treatments of semi-infinite safety
constraints and time-discretization error are provided
in~\cite{cohen_safety_2026, knoedler_safety_2025, kim_policy_2026}.

\section{RESULTS}
\label{sec:simulation}

We evaluate OCBF-QP in randomized planar benchmarks, MetaUrban simulation~\cite{wu_metaurban_2025},
and hardware experiments. Representative
scenarios can also be tested interactively through our web demo.\footnote{Web Demo: \href{https://occlusion-cbf.taekyung.me/}{https://occlusion-cbf.taekyung.me/}} 


Unless stated otherwise, all constraints in
\eqref{eq:oabcbf_qp} are hard. We also report a
relaxed-terminal ablation to assess the conservatism introduced by the
terminal condition. Because $h_j^S$ defines an auxiliary recoverability
barrier rather than a direct collision-separation constraint, this
variant introduces penalized slack only in the terminal inequalities
induced by $h_j^S$; all trajectory occupancy constraints remain hard.
The relaxed variant is empirical and is not covered by
Theorem~\ref{thm:main}.

\subsection{Randomized Benchmark Setup}
\label{sec:planar_benchmark}

\textbf{Robot and environment models: }
The double integrator has state
$\vx=[\vp^{\top},\vv^{\top}]^{\top}$ and acceleration input
$\vu$, with $\dot{\vp}=\vv$, $\dot{\vv}=\vu$,
$\|\vu\|_{\infty}\leq a_{\max}$, and
$\|\vv\|_{2}\leq v_{\max}$, where
$a_{\max}=1.0\,\textup{m/s}^{2}$ and
$v_{\max}=1.0\,\textup{m/s}$. The unicycle has state
$\vx=[p_x,p_y,\theta]^{\top}$ and input $\vu=[v,\omega]^{\top}$,
with $\dot p_x=v\cos\theta$, $\dot p_y=v\sin\theta$,
$\dot\theta=\omega$, $0\leq v\leq v_{\max}$, and
$|\omega|\leq\omega_{\max}$, where
$v_{\max}=1.0\,\textup{m/s}$ and
$\omega_{\max}=0.8\,\textup{rad/s}$. Thus, the first model captures finite
braking authority, whereas the second captures
nonholonomic turning limits. Both robots have radius $0.25\,\textup{m}$.

All trials use a $30\,\textup{m}\times30\,\textup{m}$ workspace and the
waypoint sequence $(2,2)\rightarrow(2,28)\rightarrow(28,2)\rightarrow(28,28)\,\textup{m}$.
The robot has a $10\,\textup{m}$ sensing range, a $360^{\circ}$ field
of view, and line-of-sight occlusion. Once an obstacle becomes visible,
its position and velocity are provided to every controller.

Each trial contains $N_{\textup{obs}}$ smaller moving obstacles with
radii sampled uniformly from $[0.3,0.4]\,\textup{m}$ and one larger
dynamic occluder with radius sampled from
$[0.8,1.0]\,\textup{m}$. Initially hidden obstacles are sampled behind
another obstacle at $t=0$. We test $N_{\textup{obs}}\in\{10,20,30\}$ for the unicycle, and $N_{\textup{obs}}\in\{10,30,50\}$ for the double integrator.

\textbf{Occupancy prediction and backup policy: }
At each prediction update, each occluder-induced unobserved region is
enclosed by a convex polygon
$\widehat{\calO}^{(j)}(t_k)$. The corresponding time-indexed occupancy
$\widehat{\calH}^{(j)}(t,s)$ is constructed using
\eqref{eq:expanded_polytope} with $\bar v_{\textup{o}}=1.0\,\textup{m/s}$.

For the backup policy, we select at most $M_{\textup{act}}$ occupancy sets
with the smallest current margins
$h_j^C(\vx(t_k),t_k,0)$. Their regularized directions away from the
associated occluder centers are combined using softmax weights that
prioritize smaller margins, producing a reference velocity with magnitude at most $v_{\max}$. If no set is active, the
reference is zero. The active index set and backup-policy parameters are
held fixed until the next prediction update, so the backup remains a
time-independent state-feedback law on each interval. The double-integrator
backup tracks this reference through a smoothly saturated acceleration
controller, whereas the unicycle maps it to an admissible desired heading and
speed. We use $T=0.25\,\textup{s}$ for the double integrator and
$T=2.0\,\textup{s}$ for the unicycle. 
For the benchmark implementation, we use the practical terminal-margin
construction
$h_j^S(\vy,q)=h_j^C(\vy,q-T,T)-\rho_T$.
For the double integrator,
$\rho_T=v_{\max}^{2}/(2a_{\max})=0.5\,\textup{m}$; for the unicycle,
$\rho_T=0\,\textup{m}$.

\textbf{Benchmark protocol and metrics: }
For each robot model and obstacle density, we generate $100$ randomized
scenarios and replay the same realizations across all methods. All methods
use the same robot, sensing, collision-checking, and simulation models,
with $\Delta t=0.05\,\textup{s}$ and a $500\,\textup{s}$ time limit. Algorithms are implemented in Python and JAX, and the
method-specific hyperparameters of all controllers are tuned using
Optuna. Complete implementation, tuning, and configuration details are provided in the public code repository.

Each trial is classified as success, collision, or controller
infeasibility. Success requires completing all waypoints without collision
or infeasibility; the latter is recorded when the controller returns no
admissible input. We report outcome percentages and mean computation time
per control update.

\subsection{Compared Methods}

We compare OCBF-QP with five methods under the common benchmark above.
\emph{CBF-QP}~\cite{ames_control_2019} is a reactive safety filter that
constrains only currently visible obstacles and does not propagate
hidden occupancy; for the double-integrator model, we use a high-order
CBF formulation. \emph{OA-MPC}~\cite{firoozi_oampc_2025} incorporates
hidden-obstacle reachable sets into receding-horizon optimization and
uses a terminal stopping condition. Its original formulation assumes
static occluding environment geometry, whereas moving obstacles in our
benchmark can themselves create and remove occlusions; this model
mismatch should therefore be considered when interpreting its results.

\emph{Control-Tree MPC}~\cite{phiquepal_controltree_2021} optimizes a
branched policy over discrete hidden-obstacle hypotheses generated by
the common occlusion module. \emph{Single-Hypothesis MPC} removes this branch
structure and instead optimizes one trajectory against an aggregated
risk region~\cite{phiquepal_controltree_2021}. \emph{OACP}~\cite{zheng_occlusionaware_2026} jointly
optimizes exploration and fallback trajectories with a shared initial
segment and reachable-occupancy constraints.

\subsection{Benchmark Results}

\textbf{Qualitative behavior: }
Figure~\ref{fig:ocbf_snapshot} shows that OCBF-QP completes the waypoint
sequence in a dense double-integrator scenario while intervening only
when the nominal command would compromise the certified backup
maneuver. When the nominal input already satisfies the Occlusion CBF
constraints, it is applied without modification.

\textbf{Safety and task completion: }
Figure~\ref{fig:performance_results} summarizes the paired randomized
trials. OCBF-QP achieves the highest success rate among the
hard-constrained variants for both robot models and every tested
density. At the highest densities, it succeeds in $86\%$ of unicycle
trials and $70\%$ of double-integrator trials, while the compared
methods increasingly terminate through collision or controller
infeasibility. The relaxed-terminal ablation improves the denser
double-integrator cases but does not inherit the guarantee of
Theorem~\ref{thm:main}.

\begin{figure}[t]
    \centering
    \includegraphics[width=0.99\linewidth]{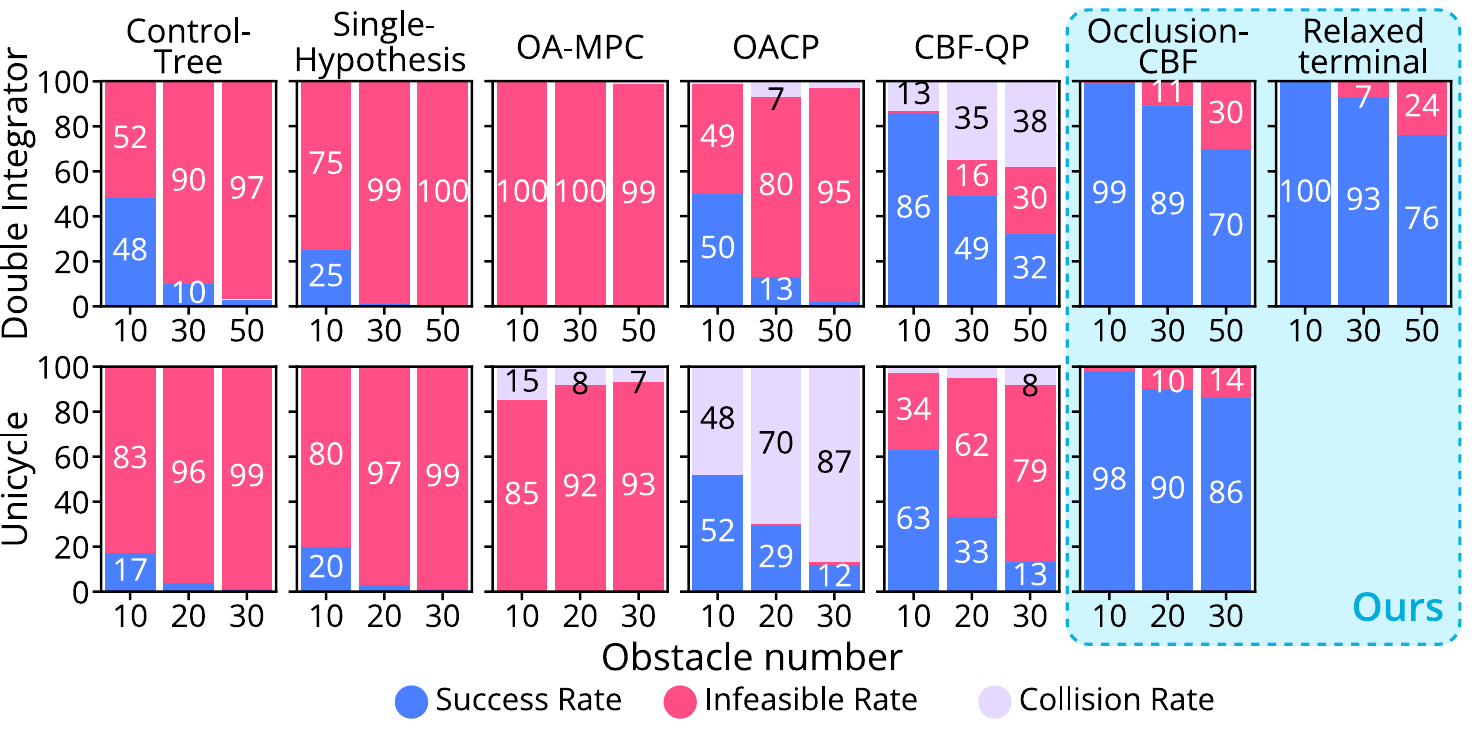}
    \caption{Benchmark results over $100$ paired trials for the double integrator
    (top) and unicycle (bottom) as $N_{\textup{obs}}$ increases. OCBF-QP attains the highest hard-constrained
    success rate at every tested density.}
    \label{fig:performance_results}
    \vspace{-5pt}
\end{figure}

\begin{figure*}[t]
    \centering    \includegraphics[width=0.99\linewidth]{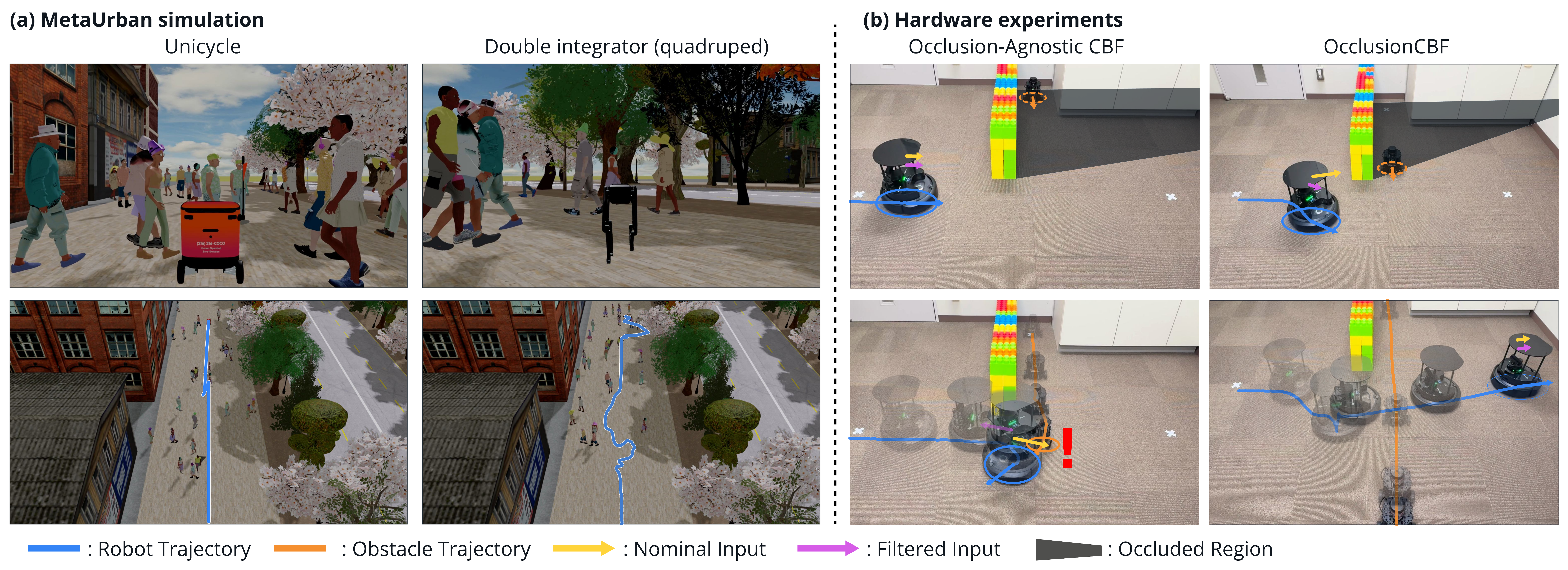}
    \caption{High-fidelity simulation and hardware evaluations. (a) MetaUrban demonstrations for the
    unicycle (left) and double-integrator quadruped (right), with robot-view
    snapshots above and trajectories below. (b) Hardware blind crossing:
    the occlusion-agnostic CBF-QP collides after late detection, whereas
    OcclusionCBF intervenes before detection and reaches the goal.}
    \label{fig:exp}
\vspace{-5pt}
\end{figure*}
 
\textbf{Online computation: }
As reported in Table~\ref{tab:computation_time}, OCBF-QP requires
$4.1$--$4.5\,\textup{ms}$ per update for the unicycle and
$1.6$--$3.9\,\textup{ms}$ for the double integrator. The reactive
CBF-QP is faster because it performs neither occupancy propagation nor
backup certification, but OCBF-QP remains substantially faster than
all predictive-planning baselines. This indicates that occlusion-aware
backup certification can be executed at control rate without online
branched trajectory optimization.

\begin{table}[t]
\centering
\caption{Average controller computation time per update
($\textup{ms}$).}
\label{tab:computation_time}
\footnotesize
\setlength{\tabcolsep}{1pt}
\begin{tabular}{l
  S[table-format=3.1] S[table-format=3.1] S[table-format=3.1]
  S[table-format=3.1] S[table-format=3.1] S[table-format=3.1]}
\toprule
\multirow{2}{*}{Method}
& \multicolumn{3}{c}{Unicycle ($N_{\textup{obs}}$)}
& \multicolumn{3}{c}{Double integrator ($N_{\textup{obs}}$)} \\
\cmidrule(lr){2-4}
\cmidrule(lr){5-7}
& {10} & {20} & {30} & {10} & {30} & {50} \\
\midrule
Control-Tree MPC~\cite{phiquepal_controltree_2021}
    & 48.3 & 54.2 & 58.9 & 106.7 & 110.7 & 117.0 \\
Single-Hypothesis MPC
    &  7.3 &  8.1 &  8.7 &  13.0 &  13.8 &  15.4 \\
OA-MPC~\cite{firoozi_oampc_2025}
    & 69.2 & 101.5 & 117.5 & 93.8 & 151.4 & 186.9 \\
OACP~\cite{zheng_occlusionaware_2026}
    & 80.5 & 76.9 & 78.5 & 87.4 & 89.4 & 88.8 \\
CBF-QP~\cite{ames_control_2019}
    &  0.7 &  0.7 &  0.8 &  0.6 &  0.7 &  0.8 \\
OCBF-QP
    &  4.1 &  4.4 &  4.5 &  1.6 &  2.3 &  3.9 \\
OCBF-QP ($h_j^S$ relaxed)
    & {--} & {--} & {--} & 1.6 & 2.4 & 3.3 \\
\bottomrule
\end{tabular}
\end{table}

\subsection{High-Fidelity Simulation and Hardware Experiments}

We further demonstrate closed-loop execution in the MetaUrban simulator using a unicycle and a quadruped modeled as a double integrator, as shown in
\autoref{fig:exp}(a). Additional demonstrations and failure cases of the
compared baselines are provided in the supplementary video and project
page.

Motivated by a real-world blind-crossing accident in which a bus occluded
an approaching vehicle from a pedestrian's view, we reproduce the scenario
illustrated in \autoref{fig:ocbf_intro} using wheeled mobile robots and
compare OCBF-QP with the occlusion-agnostic CBF-QP. The hardware setup uses ROS~2, with a TurtleBot~4 as the ego robot and a TurtleBot~3 as the dynamic obstacle, and a wall creating the occluded region. As shown in \autoref{fig:exp}(b), the baseline follows
the nominal input while the obstacle is hidden and reacts too late to avoid
collision, whereas OCBF-QP acts before detection, steers away from the
occluded region, and safely reaches the goal.

\section{CONCLUSION}
\label{sec:conclusion}

This paper presented OcclusionCBF, a planner-agnostic safety filter for navigation with potentially hidden dynamic obstacles. By propagating occluded regions into time-indexed reachable-occupancy sets and certifying a backup rollout to a verified terminal set, the method yields affine constraints for a minimally invasive QP while accounting for robot dynamics and input limits. Under the stated assumptions, we established pointwise QP feasibility, recursive feasibility, and collision avoidance for all hidden-obstacle motions covered by the occupancy prediction. Randomized benchmarks demonstrated the highest success rate among the hard-constrained methods with low-millisecond computation, while MetaUrban and blind-crossing hardware experiments showed proactive intervention before detection and safe goal completion.